\DeclareMathAlphabet{\mathpzc}{OT1}{pzc}{m}{it}
\theoremstyle{theorem}
\newtheorem{theorem}{Theorem}[section]
\theoremstyle{lemma}
\newtheorem{lemma}[theorem]{Lemma}
\theoremstyle{definition}
\theoremstyle{conjecture}
\tikzset{
>=stealth',
  punktchain/.style={
    rectangle, 
    rounded corners, 
    draw=black, very thick,
    text width=10em, 
    minimum height=3em, 
    text centered, 
    on chain},
  line/.style={draw, thick, <-},
  element/.style={
    tape,
    top color=white,
    bottom color=blue!50!black!60!,
    minimum width=8em,
    draw=blue!40!black!90, very thick,
    text width=10em, 
    minimum height=3.5em, 
    text centered, 
    on chain},
  every join/.style={->, thick,shorten >=1pt},
  decoration={brace},
  tuborg/.style={decorate},
  tubnode/.style={midway, right=2pt},
}
\begin{document}

\twocolumn[
  \ourtitle{Neural Group Actions}
  \ourauthor{Span Spanbauer \And Luke Sciarappa}
  \ouraddress{MIT \And MIT}

]


\begin{abstract}

We introduce an algorithm for designing \textit{Neural Group Actions}, collections of deep neural network architectures which model symmetric transformations satisfying the laws of a given finite group. This generalizes involutive neural networks $\mathcal{N}$, which satisfy $\mathcal{N}(\mathcal{N}(x))=x$ for any data $x$, the group law of $\mathbb{Z}_2$. We show how to optionally enforce an additional constraint that the group action be volume-preserving. We conjecture, by analogy to a universality result for involutive neural networks, that generative models built from \textit{Neural Group Actions} are universal approximators for collections of probabilistic transitions adhering to the group laws. We demonstrate experimentally that a \textit{Neural Group Action} for the quaternion group $Q_8$ can learn how a set of nonuniversal quantum gates satisfying the $Q_8$ group laws act on single qubit quantum states.

\end{abstract}

\section{Introduction}

Symmetry is ubiquitous, appearing throughout nature, art, mathematics, and the sciences. A great deal of effort has been put into designing neural networks which are effective at modeling \textit{symmetric data}, that is, designing networks which are invariant or equivariant to various symmetries.

We address a distinct problem, the problem of modeling \textit{symmetric transformations}. We show how to design neural networks which satisfy the laws of a symmetry. For example, the group capturing the symmetry of reflection is called $\mathbb{Z}_2$. It has the group law $r^2 = \mathrm{id}$, that is, applying a reflection twice does nothing. We show how to design a neural network $\mathcal{N}$ which satisfies this group law, so that $\mathcal{N}(\mathcal{N}(x)) = x$ for any data $x$, as well as how to design neural networks which satisfy the group laws for any other finite group.

These \textit{Neural Group Actions} can be used directly to model symmetric transformations which appear throughout the sciences, particularly in quantum mechanics and statistical physics. They can also be used to neurally accelerate classical algorithms which involve choosing functions satisfying algebraic constraints taking the form of group laws.

\newpage
This second-use case has already been demonstrated. Involutive Markov Chain Monte Carlo (MCMC)~\citep{neklyudov2020involutive} is an approach to designing valid MCMC proposals based on the fact that generating proposals using an involutive function, that is, a function satisfying the $\mathbb{Z}_2$ group laws, makes calculating the acceptance ratio tractable. \citet{spanbauer2020deep} introduced a class of neural networks which exactly satisfy this $\mathbb{Z}_2$ group law, called involutive neural networks, and used them in the context of Involutive MCMC to design \textit{Involutive Neural MCMC}, a fast neural MCMC algorithm.

We present \textit{Neural Group Actions}, a broad generalization of involutive neural networks which are useful for modeling symmetric transformations and for neurally accelerating algorithms.

\noindent {\bf Contributions.} This paper shows, for any finite group, how to design neural networks which exactly satisfy its group laws. Specifically, it presents the following contributions:
\vspace*{-5pt}
\begin{enumerate}
\item This paper introduces an algorithm for designing \textit{Neural Group Actions}. For any given finite group, we show how to design a high-capacity deep neural network architecture for each group element. Collectively, these neural networks exactly satisfy the group laws for any setting of the network parameters. All of these neural networks share the same parameters, but differ in structure.
\item This paper describes how to optionally constrain \textit{Neural Group Actions} to exactly preserve volume, that is, to constrain the determinant of the Jacobian of these networks to have magnitude 1.
\item This paper conjectures, by anology to a previous result~\citep{spanbauer2020deep}, that generative models built from \mbox{\textit{Neural~Group~Actions}} are universal approximators for collections of probabilistic transitions.
\item This paper demonstrates experimentally that a \textit{$Q_8$--Neural Group Action} can learn how the $R_x(\pi)$, $R_y(\pi)$, and $R_z(\pi)$ gates, which act according to the quaternion group $Q_8$ group laws, transform single qubit states.
\end{enumerate}

\newpage

\section{Background}
\label{sec:background}

Symmetries are expressed mathematically as groups---each type of symmetry has a corresponding group. For example, simple bilateral symmetry is expressed by the two element group $\mathbb{Z}_2$, the symmetries of an icosahedron are expressed by the 120 element group $A_5 \times \mathbb{Z}_2$, and the symmetries of spacetime in special relativity are expressed by a Lie group called the Poincar\'e group~\citep{kim2012theory}; this group has infinitely many elements.

Symmetric transformations are expressed mathematically as the action of a group on a set. For example, reflection across some axis and rotation by 180 degrees are distinct group actions of $\mathbb{Z}_2$ on the plane. To qualify as a group action of a group $G$, a set of operations must satisfy the defining laws of $G$ under composition. The group law of $\mathbb{Z}_2$ is $r^2 = \mathrm{id}$; both reflection and rotation by 180 degrees satisfy this group law since composing each operation with itself yields the identity operation.

In this paper we show, for any finite group $G$, how to design a \textit{$G$--Neural Group Action}: a set of neural networks which exactly satisfy $G$'s group laws by construction, independent of the network's parameters.

\section{Neural Group Actions}
\label{sec:neuralga}

\begin{figure*}[!t]

\centering 
\vspace{1mm}
\begin{tikzcd}[column sep=3mm]
                && \boxed{T^{-1}_e} \arrow[rrr,""] &&& \boxed{T_e} \arrow[rdd,""] && \\[-15pt]
                && \boxed{T^{-1}_a} \arrow[rrr,""] &&& \boxed{T_a} \arrow[rd,""] && \\[-28pt]
 x \arrow[r,""]  & \boxed{H^{-1}} \arrow[ru,"",end anchor=west] \arrow[ruu,"",end anchor=west] \arrow[rd,"",end anchor=west] \arrow[rdd,"",end anchor=west]        &&&&&\boxed{H} \arrow[r,""] &e \cdot x\\[-28pt]
                && \boxed{T^{-1}_b} \arrow[rrr,""] &&& \boxed{T_b} \arrow[ru,""] && \\[-15pt]
                && \boxed{T^{-1}_c} \arrow[rrr,""] &&& \boxed{T_c} \arrow[ruu,""] && 
\end{tikzcd}
\hspace{5mm}
\begin{tikzcd}[column sep=3mm]
                && \boxed{T^{-1}_e} \arrow[rrrd,"",in=160,out=-20,looseness=0.5] &&& \boxed{T_e} \arrow[rdd,""] && \\[-15pt]
                && \boxed{T^{-1}_a} \arrow[rrru,"",in=200,out=20,looseness=0.5] &&& \boxed{T_a} \arrow[rd,""] && \\[-28pt]
 x \arrow[r,""]  & \boxed{H^{-1}} \arrow[ru,"",end anchor=west] \arrow[ruu,"",end anchor=west] \arrow[rd,"",end anchor=west] \arrow[rdd,"",end anchor=west]        &&&&&\boxed{H} \arrow[r,""] &a \cdot x\\[-28pt]
                && \boxed{T^{-1}_b} \arrow[rrrd,"",in=160,out=-20,looseness=0.5] &&& \boxed{T_b} \arrow[ru,""] && \\[-15pt]
                && \boxed{T^{-1}_c} \arrow[rrru,"",in=200,out=20,looseness=0.5] &&& \boxed{T_c} \arrow[ruu,""] && 
\end{tikzcd}

\vspace{6mm}
\begin{tikzcd}[column sep=3mm]
                && \boxed{T^{-1}_e} \arrow[rrrddd,"",in=160,out=-20,looseness=0.5] &&& \boxed{T_e} \arrow[rdd,""] && \\[-15pt]
                && \boxed{T^{-1}_a} \arrow[rrrddd,"",in=160,out=-20,looseness=0.5] &&& \boxed{T_a} \arrow[rd,""] && \\[-28pt]
 x \arrow[r,""]  & \boxed{H^{-1}} \arrow[ru,"",end anchor=west] \arrow[ruu,"",end anchor=west] \arrow[rd,"",end anchor=west] \arrow[rdd,"",end anchor=west]        &&&&&\boxed{H} \arrow[r,""] &b \cdot x\\[-28pt]
                && \boxed{T^{-1}_b} \arrow[rrruuu,"",in=200,out=20,looseness=0.5] &&& \boxed{T_b} \arrow[ru,""] && \\[-15pt]
                && \boxed{T^{-1}_c} \arrow[rrruuu,"",in=200,out=20,looseness=0.5] &&& \boxed{T_c} \arrow[ruu,""] && 
\end{tikzcd}
\hspace{5mm}
\begin{tikzcd}[column sep=3mm]
                && \boxed{T^{-1}_e} \arrow[rrrdddd,"",in=160,out=-20,looseness=0.5] &&& \boxed{T_e} \arrow[rdd,""] && \\[-15pt]
                && \boxed{T^{-1}_a} \arrow[rrrdd,"",in=158,out=-20,looseness=0.5] &&& \boxed{T_a} \arrow[rd,""] && \\[-28pt]
 x \arrow[r,""]  & \boxed{H^{-1}} \arrow[ru,"",end anchor=west] \arrow[ruu,"",end anchor=west] \arrow[rd,"",end anchor=west] \arrow[rdd,"",end anchor=west]        &&&&&\boxed{H} \arrow[r,""] &c \cdot x\\[-28pt]
                && \boxed{T^{-1}_b} \arrow[rrruu,"",in=202,out=20,looseness=0.5] &&& \boxed{T_b} \arrow[ru,""] && \\[-15pt]
                && \boxed{T^{-1}_c} \arrow[rrruuuu,"",in=200,out=20,looseness=0.5] &&& \boxed{T_c} \arrow[ruu,""] && 
\end{tikzcd}

\vspace{4mm}

\caption{Architecture for a \textit{$K_4$--Neural Group Action}, taking $S=G$ and using left-multiplication as the action of $G$ on $S$. The Klein four-group $K_4$ has four elements $\{e,a,b,c\}$ satisfying the group laws $\langle \: a^2 = b^2 = c^2 = e, \: ab = c \: \rangle$. We obtain four different neural networks, one implementing the action of each group element on $\mathbb{R}^{p|S|}$. These four neural networks share the same parameters, but are structured differently; the permutation in the center of each network is the permutation obtained by $g$'s action on $S$.  These neural networks collectively satisfy the group laws under composition, so for example $(a \cdot (a \cdot x))$ implements the identity function; by following this diagram one can easily verify this by hand.}
\label{fig:klein}

\end{figure*}
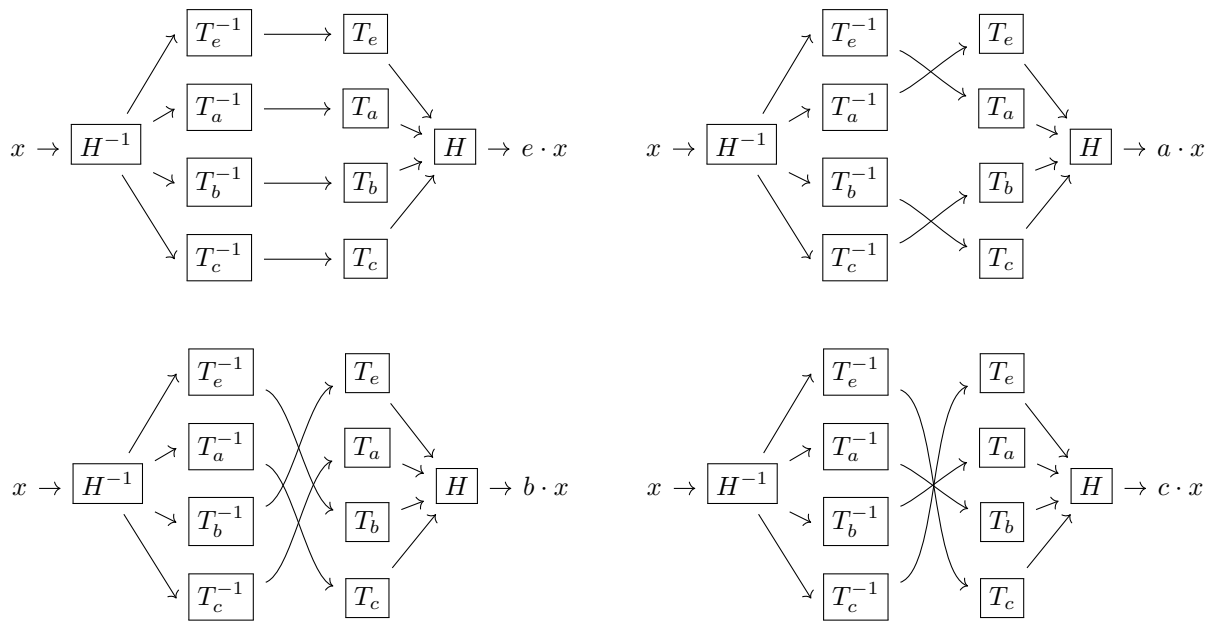
When training a \textit{$G$--Neural Group Action}, we'll want to optimize an objective function defined on the space of actions of $G$ on $\mathbb{R}^p$, for some $p$.
In order to explore this space, we need to parameterize it, that is, define a coordinate system on it. This is what a neural network architecture is: a mapping from the network's parameters, the coordinates, to a point in some interesting space of functions.

The usual deep learning methods give useful parameterizations of simpler spaces such as the space of functions from $\mathbb{R}^{p}$ to $\mathbb{R}^{q}$ or the space of invertible functions from $\mathbb{R}^{p}$ to $\mathbb{R}^{p}$~\citep{ardizzone2018analyzing}. We will use these parameterizations of simpler classes of functions to build a parameterization of the space of group actions of $G$ on $\mathbb{R}^p$. 

For example, the space of actions of $\mathbb{Z}_2$ on $\mathbb{R}^p$ can be described as the subset of the space of functions $f : \mathbb{R}^p \to \mathbb{R}^p$ consisting of the functions satisfying the equation $f(f(x)) = x$ for all $x \in \mathbb{R}^p$. A \textit{$\mathbb{Z}_2$--Neural Group Action} is a parameterization of this class of functions. We will show how to design an architecture that exactly satisfies this group law (or any other group law) due to its structure alone. 

If we were to instead attempt to explore this space of group actions using an unstructured neural network parameterizing the space of functions $\mathbb{R}^{p}$ to $\mathbb{R}^{p}$, we would have great difficulty enforcing this group law. If we were to use an unstructured neural network, even if we started out with the coordinates of a function $f$ that satisfied this equation, the direction that gradient descent moves $f$ in to improve the objective function may take us out of that subset. These coordinates would be too expressive; they would allow us to specify more functions than the ones of interest. We now show how to construct specially structured neural networks which are not too expressive; they express only valid group actions.

\subsection{Construction for \textit{Neural Group Actions}}

Suppose we have an action of $G$ on a finite set $S$, which we'll write as $(g, s) \mapsto gs$. Functions from $S$ to $\mathbb{R}^p$ can be viewed as vectors of length $p|S|$. We'll denote the whole vector by $x \in \mathbb{R}^{p|S|}$, and its value on $s \in S$ by $x_s \in \mathbb{R}^p$. The coordinates we use will be families of invertible functions $T_s : \mathbb{R}^p \to \mathbb{R}^p$ indexed by $s \in S$. We can parameterize these invertible functions, in turn, using existing invertible neural network architectures~\citep{ardizzone2018analyzing} such as NICE~\citep{dinh2014nice} or i-RevNet~\citep{jacobsen2018revnet}.
\begin{lemma}
\label{lem:groupact}
Given an action of a group $G$ on a finite set $S$ and an arbitrary invertible function $T_s: \mathbb{R}^p \to \mathbb{R}^p$ for each $s \in S$, we can define an action of $G$ on $\mathbb{R}^{p|S|}$ by setting 
\[
(g\cdot x)_{s} = T_s(T_{g^{-1}s}^{-1}(x_{g^{-1}s}))
\] for all $g \in G$, $x \in \mathbb{R}^{p|S|}$, and $s \in S$.
\end{lemma}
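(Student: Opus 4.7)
The plan is to check the two defining axioms of a group action: $e \cdot x = x$ for all $x$, and $g \cdot (h \cdot x) = (gh) \cdot x$ for all $g, h \in G$ and $x \in \mathbb{R}^{p|S|}$. Since both sides of each axiom live in $\mathbb{R}^{p|S|}$, it suffices to check equality componentwise in $s \in S$, so the formula in the statement gives a completely explicit thing to verify.

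The identity axiom is immediate: substituting $g = e$ into the formula and using $e^{-1}s = s$ gives $(e \cdot x)_s = T_s(T_s^{-1}(x_s)) = x_s$, because each $T_s$ is invertible. For composition, I would unfold the outer $g$-action and then the inner $h$-action, producing the chain $T_s \circ T_{g^{-1}s}^{-1} \circ T_{g^{-1}s} \circ T_{h^{-1}g^{-1}s}^{-1}$ applied to $x_{h^{-1}g^{-1}s}$; the middle pair telescopes and the identity $h^{-1}g^{-1} = (gh)^{-1}$ rewrites the remaining indices to yield $T_s(T_{(gh)^{-1}s}^{-1}(x_{(gh)^{-1}s})) = ((gh)\cdot x)_s$.

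A cleaner conceptual framing, which also explains the shape of Figure~\ref{fig:klein}, is to factor the proposed action as a conjugation. Define the invertible map $H : \mathbb{R}^{p|S|} \to \mathbb{R}^{p|S|}$ by $(Hx)_s = T_s(x_s)$ and the block-permutation lift $P_g$ of the action of $G$ on $S$ by $(P_g y)_s = y_{g^{-1}s}$. A one-line coordinate check shows $g \cdot x = H P_g H^{-1}(x)$. The two axioms then reduce to the standard facts that $P_e$ is the identity and $P_g P_h = P_{gh}$, and conjugation by a fixed $H$ preserves both. This factorization also matches the diagram literally: $H^{-1}$ on the left, the block-wise permutation in the middle, and $H$ on the right.

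I do not anticipate a genuine obstacle here: the argument is bookkeeping rather than analysis. The only place a careless reader could slip is the direction convention $g^{-1}s$ versus $gs$ in the permutation part, since the ``wrong'' choice gives an anti-homomorphism. The formula in the statement has already committed to $g^{-1}s$, and once this convention is in place the identity $(gh)^{-1} = h^{-1}g^{-1}$ forces the composition axiom through automatically.
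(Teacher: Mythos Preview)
Your componentwise check of the two axioms is exactly the paper's proof, down to the telescoping of $T_{g^{-1}s}^{-1}\circ T_{g^{-1}s}$ and the appeal to $(gh)^{-1}=h^{-1}g^{-1}$. The conjugation factorization $g\cdot x = H P_g H^{-1}(x)$ with $(Hx)_s = T_s(x_s)$ in your third paragraph is a correct additional perspective that the paper does not use in proving this lemma; it reduces the claim to the standard fact that the permutation lift $P$ is a homomorphism, and it does match the layout of Figure~\ref{fig:klein} as you say. Note that the $H$ the paper introduces after the lemma is a \emph{separate}, freely chosen conjugation layered on top of the action you have just built---its purpose there is to increase expressivity, not to verify the axioms---so your block-diagonal $H$ and the paper's $H$ are playing different roles.
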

Note: the above formula is equivalent to demanding that
\[
(g\cdot x)_{gs} = T_{gs}(T_{s}^{-1}(x_s)),
\] a form which may be more enlightening.
\begin{proof}
We need only check that the purported action respects the identity and the multiplication of $G$. If $e$ is the identity element of $G$, we need $e\cdot x = x$, which we may check coordinatewise:
\[
(e\cdot x)_s = T_s(T_{e^{-1}s}^{-1}(x_{e^{-1}s})) = T_s(T_s^{-1}(x_s)) = x_s,
\] since $e^{-1}s = es = s$. Now if we have $g, h$ in $G$, we also need that $(gh)\cdot x = g\cdot (h\cdot x)$, which we again may check by comparing their values for each $s$. On the one hand,
\begin{align*}
((gh)\cdot x)_s &= T_{s}(T_{(gh)^{-1}}^{-1}(x_{(gh)^{-1}s})) \\
&= T_{s}(T_{h^{-1}g^{-1}}^{-1}(x_{h^{-1}g^{-1}s})),
\end{align*} since $(gh)^{-1} = h^{-1}g^{-1}$; on the other hand,
\begin{align*}
(g\cdot(h\cdot x))_s &= T_s(T_{g^{-1}s}^{-1}((h\cdot x)_{g^{-1}s})) \\
&= T_s(T_{g^{-1}s}^{-1}(T_{g^{-1}s}(T_{h^{-1}g^{-1}s}^{-1}(x_{h^{-1}g^{-1}s}))))
\end{align*} which is equal to the preceding expression because $T_{g^{-1}s}^{-1}(T_{g^{-1}s}(y)) = y$.
\end{proof}
In general, one may choose $S=G$ and use the action of $G$ on itself by left multiplication as the action required in the hypothesis of Lemma~\ref{lem:groupact}. We take this course in our experiment in Sec.~\ref{sec:experiment}. Since this action is the free $G$-action on a single element, it is in some sense the simplest $G$-action, but also the most expressive: every $G$-action is a sum of quotients of it. It therefore represents a natural architectural choice.

This achieves our goal of parameterizing a part of the space of actions of $G$ on $\mathbb{R}^{p|S|}$ without going outside that space. 
However, though we have avoided being too expressive, we may now wonder if our parameterization is insufficiently expressive. It does have one clear deficiency: it privileges a specific decomposition of $\mathbb{R}^{p|S|}$ into $|S|$ copies of $\mathbb{R}^p$. 

For example, consider $\mathbb{Z}_2 = \langle r \mid r^2 = e\rangle$ with its nontrivial action on $\{0,1\}$. If we take $T_0(x) = 2x$ and $T_1(x) = x$, then our construction above works out to saying that $r \cdot (x, y) = (2y, x/2)$.
However, we could define a similar action that swaps and scales $x + y$ and $x - y$ analogously, that is, an action such that whenever \[r \cdot_\mathrm{new} (x, y) = (x', y'),\] then $x' + y' = 2(x - y)$ and $x' - y' = (x+y)/2$. Solving these, we see that the new action is given by \[r \cdot_\mathrm{new} (x, y) = \left(\frac{5x - 3y}{4}, \frac{3x-5y}{4}\right).\] No choice of $T_0$ and $T_1$ could result in this action under our construction, since both halves of the result vector depend on both halves of the input vector. However, this new action is intuitively doing something very similar to the previous swap-and-halve-or-double action. It is just using $x + y$ and $x - y$ to describe $\mathbb{R}^{2p}$, rather than $x$ and $y$ --- a different but equally valid choice of two $p$-dimensional vectors.

We model this by introducing a conjugation. In addition to the $T_s$, we assume an arbitrary invertible function $H : \mathbb{R}^{p|S|} \to \mathbb{R}^{p|S|}$, and define \[\Tilde{g \cdot
x} = H(g \cdot H^{-1}(x))\] where $-\cdot-$ is the action defined from the $T_s$ as above. This is still a group action. We can parameterize $H$ in the same way as the $T_s$. Including the conjugation doesn't truly get rid of the decomposition of $\mathbb{R}^{p|S|}$ into $|S|$ copies of $\mathbb{R}^{p}$, but it does allow the network to learn the best choice of this decomposition. Although the example above was linear, with \mbox{$H(x,y) = (x+y,x-y)$}, we can have $H$ nonlinear as well, corresponding to curved rather than linear coordinates. 

An example of the full construction of a \textit{Neural Group Action}, including the conjugation, is shown in Fig.~\ref{fig:klein}. 

\subsection{Volume-preserving \textit{Neural Group Action}}

It may be desirable to further constrain oneself to a class of volume preserving functions, that is, those with a Jacobian whose determinant has magnitude 1. \citet{spanbauer2020deep} used generative models derived from volume preserving involutive neural networks in order to make valid Metropolis-Hastings proposals in the context of a neural MCMC algorithm; the networks being volume-preserving enable fast computation of acceptance ratios.

We generalize this prior work by showing how to construct volume-preserving $G$--\textit{Neural Group Actions}.

\begin{lemma}
If $H$ and all of the $T_s$ are volume preserving, then so is the resulting group action.
\end{lemma}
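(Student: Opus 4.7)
The plan is to decompose the full group action into a chain of elementary maps, each of which is volume-preserving, and then invoke the multiplicativity of the Jacobian determinant under composition.

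First, I would write the total map $x \mapsto g \cdot x$ (including the conjugation) as the composition $H \circ \Phi_g \circ H^{-1}$, where $\Phi_g$ denotes the unconjugated action of Lemma~\ref{lem:groupact}. Since $H$ is assumed volume-preserving, so is $H^{-1}$ (its Jacobian is the inverse of $H$'s, hence also of determinant magnitude $1$). Thus it suffices to show $\Phi_g$ is volume-preserving.

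Next, I would factor $\Phi_g$ itself into three pieces. Viewing $\mathbb{R}^{p|S|}$ as $\bigoplus_{s \in S} \mathbb{R}^p$, define:
\begin{enumerate}
\item the block-diagonal map $D^{-1} : x \mapsto y$ with $y_s = T_s^{-1}(x_s)$;
\item the permutation $P_g : y \mapsto z$ with $z_s = y_{g^{-1}s}$, which merely shuffles the $|S|$ blocks of size $p$;
\item the block-diagonal map $D : z \mapsto w$ with $w_s = T_s(z_s)$.
\end{enumerate}
A direct check gives $(D \circ P_g \circ D^{-1})(x)_s = T_s(T_{g^{-1}s}^{-1}(x_{g^{-1}s})) = (\Phi_g(x))_s$, matching the formula in Lemma~\ref{lem:groupact}. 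The Jacobian of $D$ is block-diagonal with blocks equal to the Jacobians of the individual $T_s$, so $\det J_D = \prod_{s \in S} \det J_{T_s}$, which has magnitude $1$ by hypothesis; the same holds for $D^{-1}$. The permutation $P_g$ is a linear map whose matrix is a block permutation with $p \times p$ identity blocks, so $\det J_{P_g} = \pm 1$.

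Finally, applying the chain rule, $\det J_{g \cdot (-)} = \det J_H \cdot \det J_D \cdot \det J_{P_g} \cdot \det J_{D^{-1}} \cdot \det J_{H^{-1}}$, a product of five factors each of magnitude $1$, giving the conclusion. No step presents a real obstacle; the only thing to be careful about is verifying that the block-permutation indeed contributes a determinant of $\pm 1$ rather than something depending on $p$, which follows because it is genuinely a permutation matrix (after choosing any ordering of $S$) acting on the coordinates of $\mathbb{R}^{p|S|}$.
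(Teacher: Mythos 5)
Your proof is correct and follows essentially the same route as the paper: first strip off the conjugation by $H$ using multiplicativity of Jacobian determinants, then exploit the block-permutation structure of the unconjugated action. The only (cosmetic) difference is that you factor the map as $D \circ P_g \circ D^{-1}$ before differentiating, whereas the paper computes the Jacobian of the composite directly as a block permutation matrix and reduces it to block-diagonal form by row exchanges; both yield the same $\pm 1$ conclusion.
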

\begin{proof}
If $J_{H,x}$ is the Jacobian of $H$ at $x$, $J_{g,x}$ is the Jacobian of the original action at $x$, and $\tilde{J}_{g,x}$ is the Jacobian of the conjugated action at $x$, then 
\[
\tilde{J}_{g,x} = J_{H, g\cdot H^{-1}(x)}J_{g,H^{-1}(x)}(J_{H,H^{-1}(x)})^{-1}
\] and so if $\operatorname{det}(J_{H,x}) = 1$ for all $x$, $\operatorname{det}(\tilde{J}_{g,x}) = \operatorname{det}(J_{g,H^{-1}(x)})$ for all $x$. Thus it suffices to show that the original action is volume-preserving.

We observe that $J_{g,x}$ is a block permutation matrix; it can be viewed as an $|S|$-by-$|S|$ array of $p$-by-$p$ blocks, such that for each $s$ exactly one block in the corresponding row (or equivalently column) of the array is nonzero. The nonzero block in row $s$ is in column $g^{-1}s$ and is equal to the Jacobian of $T_sT_{g^{-1}s}^{-1}$ at $x_{g^{-1}s}$. By assumption, these Jacobians all have unit determinant. By exchanging rows to move each block into an equal row and column, we see that the determinant of $J_{g,x}$ is equal to $\pm 1$ times the determinant of a block diagonal matrix in which each block has determinant $1$. This can in turn be seen to have determinant $1$ by repeated application of the well-known fact 
\[
\operatorname{det} \begin{pmatrix} A & 0 \\ 0 & B \end{pmatrix} = \operatorname{det}(A) \operatorname{det}(B) 
\] for $A, B$ square blocks of arbitrary (possibly distinct) size. Therefore, $J_{g,x}$ itself has determinant equal to $\pm 1$, i.e.~the original action is volume-preserving, as desired.
\end{proof}

\subsection{Generative models derived from a \textit{Neural~Group~Action}}

As previously mentioned, while \textit{Neural Group Actions} are reasonably expressive, they may not be sufficiently expressive to represent certain group actions---they may not be a universal approximator for group actions as we increase the capacity of the constituent networks $H$ and $T_{s \in S}$. If we find that \textit{Neural Group Actions}, as currently defined, cannot represent certain group actions, we could look for alternative parameterizations that increase the representational power, eventually aiming to prove a universality result.

An alternative approach, motivated by a universality result from \citet{spanbauer2020deep}, is to define a more flexible domain-specific model using \textit{Neural Group Actions} as a building block, and then prove an appropriate domain-specific universality result. 

By analogy to the prior work, we define generative models of probabilistic transitions of a state $\phi : \mathbb{R}^n$ by introducing auxiliary random variables $\pi : \mathbb{R}^m$ such that a deterministic neural group action on the enlarged state space $(\phi,\pi) : \mathbb{R}^{n+m}$ defines a set of probabilistic transitions $\phi \xmapsto{g} \phi'$. We conjecture that these \textit{Generative Neural Group Actions} are universal approximators of sets of probabilistic transitions, generalizing the previous result obtained for involutive generative models~\citep{spanbauer2020deep}.

We now move to discuss experimental results showing that a \textit{Neural Group Action} can learn a group action acting on quantum states arising from the composition of certain single-qubit gates.

\section{Experiment}
\label{sec:experiment}

In quantum computing, nonuniversal sets of quantum logic gates generate, by composition, sets of transformations with the structure of finite groups. 

For example, the single-qubit gates $R_x[\pi]$, $R_y[\pi]$, and $R_z[\pi]$ generate eight unique single-qubit transformations. These transformations satisfy the laws of the quaternion group $Q_8$:
\begin{align*}
&R_x[\pi]^2 = R_y[\pi]^2 = R_z[\pi]^2 = R_x[\pi]R_y[\pi]R_z[\pi]\\ \\
&(R_x[\pi]R_y[\pi]R_z[\pi])^2 = id.
\end{align*}
In this experiment we learn the action of these transformations on quantum states via supervised training of a \textit{$Q_8$--Neural Group Action}.

\begin{figure}[!t]
{\centering

\vspace{1mm}
\textbf{Learned action on a random single qubit state}

\vspace{7mm}
\hspace{4mm}
\begin{picture}(200,200)
\put(0,0){\includegraphics[width=70mm]{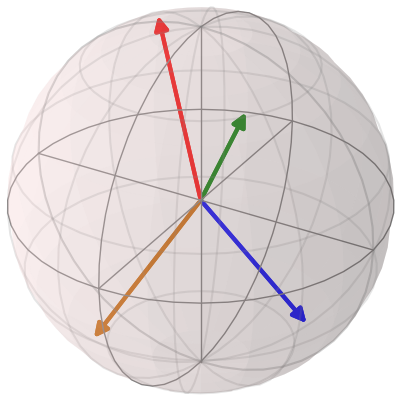}}

\put(95,-8){$|1\rangle$}
\put(95,201){$|0\rangle$}
\put(40,48){$x$}
\put(195,68){$y$}
\end{picture}
}
\vspace{5mm}
\caption{A \textit{$Q_8$--Neural Group Action} was trained to perform the eight single qubit transformations generated by the $R_x(\pi)$ and $R_y(\pi)$ quantum gates. A random quantum state is shown in green on this Bloch sphere~\citep{nielsen2002quantum} along with each of the states resulting from the learned transformation. Rendering was done in the QuTiP quantum dynamics framework~\citep{johansson2012qutip}. While all eight resulting states were drawn on the Bloch sphere, only four physically distinct states appear since four pairs of transformed states are related by a physically irrelevant global phase. These transformed states precisely match the result of applying the true single qubit gates, to seven digits of accuracy.}
\label{fig:bloch}
\end{figure}

\subsection{Architecture and training}

To build our \textit{$Q_8$--Neural Group Action}, several architectural choices were made which affect the capacity of the network.
\begin{itemize}
\item The dimension of each invertible transformation $T_s : \mathbb{R}^p \to \mathbb{R}^p$ was chosen to be $p=16$.
\item Each invertible transformation $T_s : \mathbb{R}^p \to \mathbb{R}^p$ was parameterized using a single NICE additive coupling layer~\citep{dinh2014nice}.
\item The invertible conjugation $H : \mathbb{R}^{p|S|} \to \mathbb{R}^{p|S|}$ was parameterized with three cascaded additive coupling layers~\citep{dinh2016density}, with random permutations interspersed to improve mixing~\citep{ardizzone2018analyzing}.
\end{itemize}

We performed supervised training of the \textit{$Q_8$--Neural Group Action}, minimizing the sum of squared errors of each of the real and imaginary components of each element of each transformed quantum state. We trained to convergence in PyTorch~\citep{NEURIPS2019_9015} using an Adam optimizer~\citep{kingma2014adam}, which took about two minutes on an NVIDIA~RTX~2080~Ti GPU.

\subsection{Results}

Our \textit{$Q_8$--Neural Group Action} learned to perform these eight single-qubit operations to high accuracy, with a $L_2$ loss of about $3 \cdot 10^{-13}$, that is, about 6 digits of accuracy in each component of each quantum state vector. An example of this set of transformations is shown in Fig.~\ref{fig:bloch}

Furthermore, regardless of whether the parameters were randomly initialized or fully trained, the \textit{$Q_8$--Neural Group Action} satisfied the $Q_8$ group laws to nearly the limits of floating point precision.

\section{Discussion}

This paper has, for every finite group $G$, shown how to design \textit{$G$--Neural Group Actions} parameterizing a large class of group actions of $G$. It has shown experimentally that this method works for realistic group actions by moderate-sized groups; a \textit{$Q_8$--Neural Group Action} was capable of accurately learning a group action naturally arising from the composition of single qubit gates. Furthermore, these networks are practical to train. Their depth remains constant for large groups, and their evaluation time scales only linearly with group size. Training took only minutes for our experimental demonstration.

We have also introduced two variations on this method. The first are \textit{Volume Preserving Neural Group Actions} which parameterize the subset of group actions that have a Jacobian whose determinant has magnitude 1. The second are \textit{Generative Neural Group Actions} which we conjecture can universally approximate sets of probabilistic transitions. These variations generalize methods used by \citet{spanbauer2020deep} in a neurally accelerated MCMC algorithm. We expect that these variations may have a role to play in neurally accelerating other probabilistic algorithms.

\section{Acknowledgements}

The authors would like to thank Fonterra for supporting this research.

\bibliography{mybib}

\end{document}